\begin{document}

\title{ Cascading Randomized Weighted Majority: A New Online Ensemble Learning Algorithm}

\author{Mohammadzaman Zamani \and  Hamid Beigy  \and Amirreza Shaban}
\institute{Department Computer Engineering , Sharif University of Technology, Tehran, Iran\\
Department Computer Science and Engineering,  Ohio State University, USA}

%\author{Mohammadzaman Zamani\institute{Sharif University of Technology},\{mz\_zamani\}@ce.sharif.edu,
%Amirreza Shaban\institute{Ohio State University},\{shaban\}@cse.ohio-state.edu,
% Hamid Beigy\institute{Sharif University of Technology},\{beigy\}@ce.sharif.edu}

%\institute{Mohammadzaman Zamani\\
%Department Computer Engineering , Sharif
%University of Technology, Tehran, Iran\\
%{\em zamani@ce.sharif.edu}
%\\
%\vspace{5mm}Amirreza Shaban\\
%Department Computer ** , 
%Ohio State University, **, **\\
%{\em beigy@ce.sharif.edu}
%\\
%\vspace{5mm} Hamid Beigy \\
%Department Computer Engineering , Sharif
%University of Technology, Tehran, Iran\\
%{\em beigy@sharif.edu}
%}

\maketitle

\begin{abstract}
With the increasing volume of data in the world, the best approach for learning from this data is to exploit an online learning algorithm. Online ensemble methods are online algorithms which take advantage of an ensemble of classifiers to predict labels of data. Prediction with expert advice is a well-studied problem in the online ensemble learning literature. The Weighted Majority algorithm and the randomized weighted majority (RWM) are the most well-known solutions to this problem, aiming to converge to the best expert. Since among some expert, The best one does not necessarily have the minimum error in all regions of data space, defining specific regions and converging to the best expert in each of these regions will lead to a better result. In this paper, we aim to resolve this defect of RWM algorithms by proposing a novel online ensemble algorithm to the problem of prediction with expert advice. We propose a cascading version of RWM to achieve not only better experimental results but also a better error bound for sufficiently large datasets.
\end{abstract}

\begin{keywords} 
Online learning, ensemble learning, predicting with expert advice, randomized weighted majority, cascading randomized weighted majority.
\end{keywords}

\section{Introduction}
Supervised learning algorithms are provided with instances already classified. In these algorithms, each instance has a label, identifying the class that instance belongs to. In supervised algorithms, the goal is to extract a general hypothesis from a number of labeled instances in order to make predictions about the unlabeled data.  Every learning algorithm uses a number of assumptions, therefore, it performs well in some domains; while it does not have appropriate performance in others \cite{Schaffer93}. As a result, combining classifiers is proposed as a new trend to improve the classification performance \cite{Bauer99}. The paradigm of prediction with expert advice is concerned with converging to the best expert among the ensemble of classifiers with a small misclassification rate during the operation of the algorithm. This concept has been studied extensively in the theoretical machine learning literature \cite{Chernov10supermartingales}, \cite{Bianchi06predictionlearning}, and attracts a lot of attentions in practice as well \cite{Vovk_predictionwith,Vovk95agameofprediction}.

Ensemble methods show outstanding performance in many applications, such as spam detection \cite{Saeedian08spamdetection,Zhen06anapproachtospam,Neumayer06clusteringbased}, intrusion detection \cite{Mukkamala05intrusiondetection,Fan02ensemblebased,Cherbrolu05featurededuction}, object tracking \cite{Avidan05ensembletracking,Tian07onlineensemble}, and feature ranking \cite{sb13a}. 
 Bagging \cite{Breiman96} and Boosting \cite{Freund96Experiments,Ross96baggingboosting}, which are well-known ensemble methods, rely on a combination of relatively weak models to achieve a satisfactory overall performance than their weak constituents. Online Bagging and Online Boosting\cite{Oza05} are also proposed to handle situations when the entire training data cannot fit into memory, or when the data set 
is of stream nature \cite{Oza05,sb11,hab12a,hab12b,hab13}. While the above-mentioned methods consist of weak learners, the mixture of experts algorithms selects among some learners that are experts in a specific input region \cite{Jacobs91adaptivemixtures,Jordan94hierarchicalmixtures}. Classification of data sequences is also the topic of recent research in the machine learning community. As a pioneering model of online 
ensemble learning, prediction with expert advice was first introduced in \cite{Desantis88learningprobabilistic,Littlestone94,Vovk90aggregatingstrategies} and 
recent investigations in this area lead to outstanding novel methods \cite{Bianchi05improvedsecondorder}.

Predicting with Expert Advice problem has the primary goal of predicting the label of data with an error rate close to that of the best expert. A 
simple and intuitive solution to this problem is Halving. Weighted Majority(WM) and its randomized version called randomized weighted majority  (RWM) are the most well-known solutions to 
this problem and presented in \cite{Littlestone94}. These algorithms are based on Halving, but have a better mistake bound dependent on the number of the 
experts and the error rate of the best expert. Another approach to this problem is Exponential Weighted Average(EWA) \cite{Bianchi06predictionlearning}, which is fundamentally very 
similar to RWM. Instead of using only zero-one loss function as is used in RWM, it exploits a convex loss function, and instead of finding a mistake bound, EWA 
obtains a regret bound.

All the above mentioned methods are based on the definition of the best expert. The best expert is the expert with the minimum average error during the 
experiment. Therefore, intuitively, the best expert does not necessarily have both the highest true negative rate and the highest true positive rate in the 
experiment. Our experiments reveal the fact that finding the best classifier for positive and negative output regions separately and predicting 
based on them, leads to a significant improvement in the performance of the classification.

In this paper, we propose a simple and effective method to find the experts that have the lowest false positive and the lowest false negative rates besides the overall 
best expert, simultaneously. The proposed method is called \emph{cascading randomized weighted majority} (CRWM), and presents a cascade version of RWMs to find these best experts. 
Theoretically, we show that CRWM converges to the best experts with a mistake bound tighter than that of RWMs in exposure to sufficient 
number of data points. Practically, our experiments on a wide range of well-known datasets support our contribution and show outstanding performance in 
comparison to not only RWM methods, but also some other well-known online ensemble methods. While we introduce the cascading method based on RWM, considering the 
similarities between RWM and EWA, CRWM can be similarly applied to EWA as a framework.

The rest of this paper is organized as follows: In the next section, online ensemble learning as an approach to the problem of predicting with 
expert advice is discussed. In section 3, the proposed algorithm and its mistake bound is presented. Section 4 evaluates the proposed algorithm and 
compares the experimental results to several other online ensemble methods. Finally, the conclusion is presented in section 5.

\section{Related Work}
In nearly all online learning algorithms for classification problem, there is a common scenario which consist of these phases: First of all, 
the learner is given with an instance, then the learner assigns a label to the given instance, and at the end the correct label of that instance is given to the 
learner; moreover, the learner learns this new labeled data to increase its performance. In the following we define predicting with 
expert advice problem which exploits a similar scenario, and Randomized Weighted Majority algorithm as one of its well-known solution.

\subsection{predicting with expert advice}
Let us consider a simple intuitive problem from \cite{Blum98}; a learning algorithm is given the task of predicting the weather each day that if it will rain today or not.
 In order to make this prediction, the thought of several experts is given to the algorithm. Each day, every expert says yes or no to this question and 
 the learning algorithm should exploit this information to predict its opinion about the weather. After making the prediction, the algorithm is told how 
 the weather is today. It will be decent if the learner can determine who the best expert is and predict the best expert's opinion as its output. 
 Since we do not know who the best expert is, our goal instead would be performing nearly as the best expert's opinion so far. It means a good learner should guarantee 
 that at any time, it has not performed much worse than none of the experts.  An algorithm that solves this problem is consist of the following stages. 
First, it receives the predictions of the experts. Second, Makes its own perdition and third, Finally, it is told the correct answer.

\subsection{Randomized weighted majority}
Weighted Majority algorithm and its randomized version are the most famous solutions of predicting with expert advice problem. The Randomized Weighted 
Majority, which is the fundamental part of the proposed algorithm, has several base classifiers(expert) and each classifier has a weight factor. Every time a new 
instance is received, each of these base classifiers predicts a label for that instance, and the algorithm decides a label based on these predictions and the weight 
factors of the classifiers. Whenever the true label of that instance arrives, the weight factors of the classifiers should be updated in a way that each 
classifier that predicts a wrong label would be penalized by a constant factor. This algorithm is proven that converges to the best classifier among all the 
base classifiers.  Algorithm 1 describes the pseudo code of the randomized weighted majority algorithm.

\begin{algorithm}
\caption{Randomized weighted majority \cite{Littlestone94}}
\begin{algorithmic}

 \REQUIRE $w_i \gets 1$ $\vee$ \newline
	$x_{ij}$ $\gets$ prediction of $i^{th}$ expert on $j^{th}$ data $\vee$  \newline
	$n \gets$ number of experts $\vee$  \newline
	$C_j \gets$ correct label of $j^{th}$ data $\vee$ \newline
	$N \gets$ number of data $\vee$ \newline
	$ \beta \gets$ the penalty parameter
 \ENSURE $y_j$ = output label of $j^{th}$ data

\FOR{$j = 1 \to N$}
\STATE W = $\sum_{i} w_i$ 
\STATE $y_j = x_{ij}$, with probability $w_i/W$
\FOR{$i = 1 \to n$}
\IF{$x_{ij} \neq C_j$}
\STATE $w_i \gets w_i \times \beta $
\ENDIF
\ENDFOR
\ENDFOR
\end{algorithmic}
\end{algorithm}

It has been shown that, on any trial sequence, the expected number of Mistakes (M) made by randomized weighted majority Algorithm satisfies the following inequality 
\cite{Littlestone94}:
\begin{equation} 
	M \le \frac{m \ln(1/\beta) + \ln n}{(1-\beta)},
\end{equation}
where m is the number of mistakes made by the best expert so far and $\beta$ is a constant penalizing value.

\section{ The cascading randomized weighted majority algorithm }

When the data size becomes too large, the randomized weighted majority (RWM) algorithm tends to decide according to the best expert's opinion. It takes time for the algorithm to converge to the best expert, and the learner may make more mistakes compared to the best one. However, when the best expert is discovered by the learner, one 
should be sure that the algorithm did not make many mistakes more than the best one, and we can say that it predicts whatever the best expert says from 
now on.

In this section, we propose an online learning algorithm which its main idea is to define more than one best expert, each for a number of data instances. In fact, the algorithm tries to find the best experts, and for every new data instance, decides which expert is the most suitable one, and predicts according to the opinion of that expert.

Each expert is actually a classifier. As we studied numerous classifiers, we observed that they often do not have low false positive (FP) rates at the same time as having low false negative (FN) rates. Figure \ref{fig:overflow} shows a two-class dataset and three different linear classifiers. Although, $C_o$ has lowest error rate its FP rate and also FN rate are not the best among these three classifiers. As it is shown in the figure, $C_p$ has lower FP rate than $C_o$ and $C_n$ has lower FN rate than $C_o$, either. As a result, instead of looking for the expert with the lowest error rate, we look for experts with lowest FP and FN rates leading us to define three best classifiers.

\begin{figure}[ht!]
\centering
\includegraphics[]{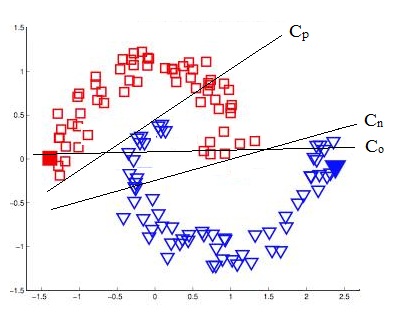}
\caption{The different classifiers for the given data set}
\label{fig:overflow}
\end{figure}

Finding the overall best expert in an ensemble of online classifiers is the goal of Weighted Majority algorithm; However, finding the best  positive expert and the best negative expert simultaneously is 
still a problem and we propose the following algorithm to solve this problem. 

\begin{figure}[ht!]
\centering
\includegraphics[width=128mm]{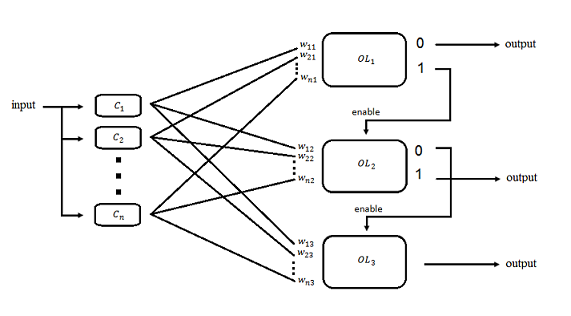}
\caption{The structure of the CRWM algorithm}
\label{fig:crwm}
\end{figure}

As it is shown in the Figure \ref{fig:crwm}, the proposed algorithm has $n$ base classifiers, and three online learners with RWM mechanism that exploit these classifiers. In other words, 
there are $3 \times n$ weight factors, $n$ weight factors for each learner. For every new data instance, each of these $n$ base 
classifiers predicts a label. Using the corresponding weight factors, these predictions are given to the learners in order for them to make their predictions. Since the first algorithm is responsible for predicting negative labels, If this algorithm predicts the label as negative, the label is set to negative, otherwise, the second algorithm will be applied. If the second algorithm predicts the label as positive, the output label will be positive, otherwise the third algorithm will be applied and the output of this algorithm would be the output label for the instance. 

Although the described scenario states the algorithm for two class datasets, the scenario is so similar for multi-class datasets. For a multi-class dataset, the algorithms would still use $n$ base classifiers, but it needs $L+1$ online learners, where $L$ is the number of different classes. One for each output class besides 
a final learner. So, in an $L$ class dataset the algorithm would have $(L+1) \times n$ weight factors. The rest of algorithm is same as the algorithm for two class datasets. For simplicity; in what follows, we explain about two class datasets. However, all the following; including the proof of the algorithm, still stands for multi-class datasets.

Whenever the correct label of the instance arrived, this correct label will be given only to the learner that produced the output label. Therefore, on every data instance, the weight factors of only one learner can be updated. As the learners are 
using RWM mechanism; in the specified learner, the weight factor of the classifiers that made the wrong prediction would be penalized by a constant factor $\beta$. 

Since the CRWM algorithm is using three RWM learners in three levels, and due to its cascading structure, we called it cascading randomized weighted majority. Algorithm 2 is the pseudo code of CRWM algorithm.

\begin{algorithm}
\caption{Cascading randomized weighted majority algorithm}
\begin{algorithmic}

 \REQUIRE $w_ij \gets 1$ $\vee$ \newline
	$x_{ijk}$ $\gets$ prediction of $i^{th}$ base classifier of $j^{th}$ learner on $k^{th}$ data $\vee$  \newline
	$n \gets$ number of experts $\vee$  \newline
	$C_j \gets$ correct label of $j^{th}$ data $\vee$ \newline
	$N \gets$ number of data $\vee$ \newline
	$ \beta \gets$ the penalty parameter
 \ENSURE $y_j$ = output label of $j^{th}$ data
 
\FOR{$j = 1 \to N$}
	\STATE $W_1 = \sum_{i} w_{i1}$ 
	\STATE $y1 = x_{i1k}$, with probability $w_{i1}/W_1$
	\STATE $W_2 = \sum_{i} w_{i2}$ 
	\STATE $y2 = x_{i2k}$, with probability $w_{i2}/W_2$
	\STATE $W_3 = \sum_{i} w_{i3}$ 
	\STATE $y3 = x_{i3k}$, with probability $w_{i3}/W_3$
	\IF{y1 = 0}
		\FOR{$i = 1 \to n$}
			\IF{$x_{ij1} \neq C_j$}
				\STATE $w_{i1} \gets w_{i1} \times \beta $
			\ENDIF
		\ENDFOR
	\ELSE \IF{y2 = 1}
		\FOR{$i = 1 \to n$}
			\IF{$x_{ij2} \neq C_j$}
				\STATE $w_{i2} \gets w_{i2} \times \beta $
			\ENDIF
		\ENDFOR
	\ELSE 
		\FOR{$i = 1 \to n$}
			\IF{$x_{ij3} \neq C_j$}
				\STATE $w_{i3} \gets w_{i3} \times \beta $
			\ENDIF
		\ENDFOR
	\ENDIF
	\ENDIF
\ENDFOR

\end{algorithmic}
\end{algorithm}

\subsection{The Mistake Bound}
In this section, we intend to find the bound on the number of mistakes which probably occurred by this algorithm in a series of predictions. We know that there is a 
mistake bound for RWM algorithm. First, we use the method thereby RWM mistake bound was calculated to find a mistake bound for CRWM and subsequently, 
show that this mistake bound is better than RWM's mistake bound, when the data size is large enough. 

\begin{theorem}On any sequence of trials, the expected number of mistakes ($M_j$) made by $j^{th}$ learner ($OL_1$, $OL_2$, $OL_3$) in cascading randomized weighted majority algorithm satisfies the following condition.
\begin{equation}
M_j \le  \frac {m_{kj} \times  \ln(1/ \beta) + \ln n} {(1- \beta)},  \hspace{2mm}\forall (j, k),
\end{equation}
where $m_{kj}$ is the number of mistakes made by the $k^{th}$ expert in $j^{th}$ learner so far and $n$ is the number of base classifiers(experts).
\end{theorem}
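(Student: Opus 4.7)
The plan is to reduce the claim to the mistake bound for the standard RWM algorithm (Equation 1) by showing that each of the three learners $OL_1, OL_2, OL_3$ behaves exactly as an independent RWM instance on the subsequence of trials for which it is the one producing the final output. Once this reduction is in place, the bound follows by a direct citation of the known RWM guarantee.

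First I would fix a learner $OL_j$ and let $T_j$ denote the (random) set of trial indices on which $OL_j$'s prediction becomes the output label: for $j=1$ these are the trials where $OL_1$ predicts negative, for $j=2$ those where $OL_1$ predicts positive and $OL_2$ predicts positive, and for $j=3$ the remainder. Inspection of Algorithm~2 shows that the weights $\{w_{ij}\}_i$ are multiplied by $\beta$ exactly on trials $k \in T_j$ for which the corresponding expert disagreed with $C_k$, and left untouched otherwise. Consequently, if one restricts attention to the subsequence indexed by $T_j$, the evolution of the weight vector of $OL_j$ is indistinguishable from that of a stand-alone RWM fed the data $(x_{1jk}, \ldots, x_{njk}, C_k)$ for $k \in T_j$. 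Likewise, $M_j$ counts exactly the mistakes of $OL_j$ on $T_j$ and $m_{kj}$ counts the errors of expert $k$ on the same subsequence.

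Next I would apply the potential-function argument underlying Equation~1 to this restricted sequence. Conditional on the realization of $T_j$ — which is determined by the randomness of $OL_1,\ldots,OL_{j-1}$ and is independent of the internal coin flips of $OL_j$ — the expected number of mistakes of $OL_j$ is bounded by $(m_{kj}\ln(1/\beta)+\ln n)/(1-\beta)$ for every expert $k$. Taking outer expectation over the routing preserves the bound because it holds pointwise for each realization of $T_j$, and the law of total expectation then yields the stated inequality for the unconditional $M_j$.

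The main obstacle, though ultimately a mild one, is making the conditioning argument precise: the RWM bound is usually phrased as holding \emph{on any sequence of trials}, and one has to argue that this ``any sequence'' formulation still applies when the sequence fed to $OL_j$ is itself a random thinning depending on the upstream learners. The cleanest way to resolve this is to observe that Equation~1 gives a deterministic bound for every fixed input stream, so conditioning on $T_j$ reduces the cascaded setting to the known one, and the randomness of the cascade merely averages deterministic bounds of identical form.
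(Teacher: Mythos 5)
Your proposal is essentially the paper's own argument: the paper proves Theorem~1 by re-running the RWM potential-function derivation on exactly the subsequence of trials classified by learner $j$ (defining $F_{ij}$ as the fraction of weight on the wrong answer at the $i$th such trial, showing $W_j = n\prod_i\bigl(1-(1-\beta)F_{ij}\bigr)\ge\beta^{m_{kj}}$, and taking logs), which is the same reduction you describe, with the only difference that you cite the RWM bound as a black box where the paper re-derives it. One caveat on your supporting claim: $T_j$ is \emph{not} independent of the internal coin flips of $OL_j$ for $j=1,2$ --- a trial is routed to $OL_1$ precisely when $OL_1$'s own randomized prediction is negative, and to $OL_2$ when $OL_2$'s own prediction is positive, so the routing is determined in part by the very learner whose mistakes you are counting. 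The conclusion survives because the potential inequality $\sum_{i\in T_j}F_{ij}\le\bigl(m_{kj}\ln(1/\beta)+\ln n\bigr)/(1-\beta)$ holds pathwise for every realization of the routing and weight trajectory, but the identification of $\sum_{i\in T_j}F_{ij}$ with the expected mistake count of $OL_j$ under this self-referential routing is the genuinely delicate step --- one that the paper itself also passes over without comment.
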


\begin{proof}Define $F_{ij}$ as the fraction of the total weight on the wrong answer at the $i^{th}$ trial on the $j^{th}$ learner($OL_1$, $OL_2$, $OL_3$), 
and let $M_j$ be the expected number of mistakes of the $j^{th}$ learner so far. So after the $t^{th}$ trial, we would have $ M_j = \sum_{i=1}^{t}{F_{ij}}$.

On the $i^{th}$ instance that classified by the $j^{th}$ learner, the total weight of the $j^{th}$ learner where is defined by $W_j$ changes according to:
\begin{equation}
W_j \leftarrow W_{j} (1-(1-\beta)F_{ij}).
\end{equation}

Since when the data is classified by $j^{th}$ learner, we multiply the weights of experts in $j^{th}$ learner that made a mistake by $\beta$ and there is an 
$F_{ij}$ fraction of the weight on these experts. Regarding to the initial value of weight factor for each base classifier which is set to 1, and considering 
$n$ as the number of base classifiers, the final total weight for the $j^{th}$ learner is:
\begin{equation}
W_j = n \prod_{i=1}^t {(1-(1-\beta)F_{ij})}.
\end{equation}

Let $m_{kj}$ be the number of total mistakes of the $k^{th}$ base classifier in the $j^{th}$ learner so far, therefore its weight factor would be $\beta^{m_{kj}}$ at this time. Using the fact that the total weight must be at 
least as large as the weight of the $k^{th}$ classifier; for each value of j and k, we have:
\begin{equation}
n \prod_{i=1}^t {(1-(1-\beta)F_{ij})} \ge \beta^{m_{kj}}.
\end{equation}

Taking the natural log of both side we get:
\begin{equation}
\ln n + \sum_{i=1}^t \ln{(1-(1-\beta)F_{ij})} \ge m_{kj} \ln \beta
\end{equation}
\begin{equation}
- \ln n - \sum_{i=1}^t \ln{(1-(1-\beta)F_{ij})} \le m_{kj} \ln (1/\beta).
\end{equation}

Since, $\forall \hspace{1mm} 0<x<1 , - \ln(1-x) > x $, following equation will be obtained: 
\begin{equation}
- \ln n + (1-\beta) \sum_{i=1}^t {F_{ij}} \le m_{kj} \ln (1/\beta).
\end{equation}

Using $ M_j = \sum_{i=1}^t {F_ij}$, we conclude:
\begin{equation}
M_j \le \frac{m_{kj} \times \ln(1/\beta) + \ln n}{(1-B)},   \hspace{2mm}\forall (j, k).
\end{equation}

Which completes the proof of theorem. 
\end{proof}

Now we have a bound for the expected number of mistakes that every learner will do in a sequence of trials. Since for 
each instance only one of these three learners respondes and predicts the output label, in the following theorem we intend to find the total expected number 
of mistakes for the algorithm by aggregating the expected number of mistakes of these learners.

\begin{theorem} 
 On any sequence of trials, the expected number of mistakes made by cascading randomized weighted majority algorithm$(M_{CRWM})$ satisfies the following condition.
\begin{equation}
M_{CRWM} \le  \frac { \sum_{i=1}^3 m_i \times  \ln(1/ \beta) + 3 \ln n} {(1- \beta)},
\end{equation}
where $m_i$ is the number of mistakes made by the best expert of $i^{th}$ learner so far, and $n$ is the number of experts in each learner.
\end{theorem}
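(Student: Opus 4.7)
The plan is to decompose $M_{CRWM}$ across the three cascaded learners and then apply Theorem~1 (the per-learner bound just proved) to each piece. Concretely, I would first observe that by construction of Algorithm~2, every instance is classified (and charged) by exactly one of the three learners $OL_1, OL_2, OL_3$: either $y_1 = 0$ and $OL_1$ responds, or $y_1 = 1$ and $y_2 = 1$ so $OL_2$ responds, or else $OL_3$ responds. Consequently, letting $M_j$ denote the expected number of mistakes made by $OL_j$ on the subsequence it handles, we have the exact decomposition
\begin{equation}
M_{CRWM} = M_1 + M_2 + M_3.
\end{equation}

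Next I would invoke Theorem~1 once per learner. For each $j \in \{1,2,3\}$, pick $k = k_j^\star$ to be the index of the base classifier that is best within $OL_j$ (i.e.\ the one achieving the minimum $m_{kj}$), and denote this minimum by $m_j := m_{k_j^\star j}$. Theorem~1 then yields, for every $j$,
\begin{equation}
M_j \le \frac{m_j \ln(1/\beta) + \ln n}{1-\beta}.
\end{equation}

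Summing these three inequalities and using the decomposition from the first step gives
\begin{equation}
M_{CRWM} = \sum_{j=1}^{3} M_j \le \frac{\sum_{j=1}^{3} m_j \ln(1/\beta) + 3\ln n}{1-\beta},
\end{equation}
which is exactly the claimed bound. A minor subtlety I would want to be careful about is that Theorem~1 was established for a sequence of trials actually processed by a given learner; since each learner only updates weights and only ``makes mistakes'' on instances routed to it, the bound applies on the subsequence the learner sees, and that is precisely what the definition of $M_j$ uses. Beyond this bookkeeping the argument is a clean sum of three copies of the previous theorem, so I do not anticipate any real obstacle; the only conceptual point worth emphasizing in the write-up is why the three subsequences partition the trials exactly (no overlap, no omission), which follows directly from the \textbf{if/else if/else} structure of Algorithm~2.
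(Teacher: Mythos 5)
Your proposal is correct and follows essentially the same route as the paper: decompose $M_{CRWM} = M_1 + M_2 + M_3$ using the fact that the cascade routes each instance to exactly one learner, then apply Theorem~1 with $k$ chosen as the best expert of each learner and sum the three bounds. The extra care you take in noting that Theorem~1 applies to the subsequence each learner actually handles is a worthwhile clarification but does not change the argument.
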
 

\begin{proof}Since, each instance is classified by exactly one of the three learners, the expected number of mistakes  
made by cascading randomized weighted majority algorithm can be obtained with the following equation:
\begin{equation}
M_{CRWM} = \sum_{j=1}^3 M_j
\end{equation}

By using Theorem 1 and defining $k, k'$ and $k"$ as the index of the best expert of $OL_1, OL_2$ and $OL_3$, respectively, we have:
\begin{equation}
M_{CRWM} \le  \frac {(m_{k1} + m_{k'2} + m_{k"3}) \times  \ln(1/ \beta) + 3 \ln n} {(1- \beta)}.
\end{equation}

For convenience we define $m_i$ as the number of mistakes made by the best expert of $i^{th}$ learner, So we would have:
\begin{equation}
M_{CRWM} \le  \frac { \sum_{i=1}^3 m_i \times  \ln(1/ \beta) + 3 \ln n} {(1- \beta)}.
\end{equation}

Which completes the proof of the theorem. 
\end{proof}

Now we need to know how suitable is this bound. For this purpose, we will compare the bound of CRWM algorithm with  the bound of RWM algorithm in the following theorem.

\begin{theorem} The mistake bound for CRWM algorithm is better than RWM mistake bound, when the data size is going to be large.
\end{theorem}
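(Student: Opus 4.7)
The plan is to compare the bound of Theorem~2 directly with the RWM bound of inequality~(1). Writing
\begin{equation}
B_{CRWM} := \frac{\bigl(\sum_{i=1}^3 m_i\bigr)\ln(1/\beta) + 3\ln n}{1-\beta}, \qquad B_{RWM} := \frac{m \ln(1/\beta) + \ln n}{1-\beta},
\end{equation}
the inequality $B_{CRWM} < B_{RWM}$ is algebraically equivalent to
\begin{equation}
m - \sum_{i=1}^3 m_i \;>\; \frac{2\ln n}{\ln(1/\beta)},
\end{equation}
so the entire theorem reduces to controlling this one quantity.

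The first step, which I would isolate as a short lemma, is that $\sum_{i=1}^3 m_i \le m$ holds on every sample path, with no assumptions. Partition the $N$ instances into the disjoint subsets $S_1, S_2, S_3$ routed by CRWM to $OL_1, OL_2, OL_3$, and let $e_i^{(k)}$ denote the number of mistakes of base classifier $k$ on $S_i$. Then $m = \min_k \sum_{i=1}^3 e_i^{(k)}$ while $m_i = \min_k e_i^{(k)}$. Fixing a minimizer $k^\star$ of $\sum_i e_i^{(k)}$ and using it (suboptimally) inside each $m_i$ yields $\sum_i m_i \le \sum_i e_i^{(k^\star)} = m$, establishing the lemma.

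The main obstacle is the remaining quantitative growth claim: $m - \sum_i m_i$ must eventually exceed the constant $2\ln n / \ln(1/\beta)$. This is the substantive content of the theorem and genuinely requires the heterogeneity premise that motivated CRWM in the first place, namely that on at least one of the three substreams some classifier strictly outperforms the globally best expert, as sketched by $C_p, C_n, C_o$ in Figure~\ref{fig:overflow}. Under any such strict per-substream error-rate gap, $m - \sum_i m_i$ grows at least linearly in the length of that substream, whereas $2\ln n / \ln(1/\beta)$ does not depend on $N$; hence there exists $N_0$ beyond which the displayed inequality is satisfied. I would therefore conclude by stating this heterogeneity assumption explicitly (as is implicit in the theorem's ``when the data size is going to be large'' clause) and combining it with the lemma to give $B_{CRWM} < B_{RWM}$ for all $N \ge N_0$.
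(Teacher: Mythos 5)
Your proposal is correct and follows essentially the same route as the paper's proof: the same algebraic reduction to $m - \sum_i m_i > 2\ln n/\ln(1/\beta)$, the same observation that the overall best expert restricted to each substream dominates that substream's best (the paper's $m_{kj}\le m_{pj}$), and the same conclusion that under the heterogeneity premise the gap grows linearly in the substream length $K_1$ while the right-hand side is a constant in $N$. Your version is slightly cleaner only in that it states the heterogeneity assumption explicitly as a hypothesis, which the paper also does, calling it ``our main hypothesis.''
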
 

\begin{proof} 
We know that the expected number of mistake for RWM Algorithm is bounded by the following inequality:
\begin{equation} 
M_{RWM} \le \frac{m \ln(1/B) + \ln n}{(1-B)} = Bound_{RWM}.
\end{equation}

We have already defined $Bound_{RWM}$ in the above equation In addition; using  eq.(12), we define $Bound_{CRWM}$ as follows:
\begin{equation}
Bound_{CRWM} = \frac {(m_{k1} + m_{k'2} + m_{k"3}) \times  \ln(1/ \beta) + 3 \ln n} {(1- \beta)}.
\end{equation}

Now, it is needed to show that by increasing the number of incoming data, $Bound_{CRWM}$ is less than $Bound_{RWM}$, so we should have:
\begin{equation}
Bound_{CRWM} - Bound{RWM} < 0.
\end{equation}

Using the eq.(14) and the eq.(15) we derive the following inequality:
\begin{equation}
\frac {(m_{k1} + m_{k'2} + m_{k"3}) \times  \ln(1/ \beta) + 3 \ln n} {(1- \beta)} - \frac{m \ln(1/B) + \ln n}{(1-B)} < 0,
\end{equation}
where $m$ is the number of overall errors of the best expert. Lets suppose the index of the overall best expert is $p$. So, the number of mistakes made by the 
overall best expert in $OL_1$ is defined by $m_{p1}$ and similarly $m_{p2}$ and $m_{p3}$ are the number of mistakes of the overall best expert in $OL_2$ and $OL_3$, respectively. By using these definitions and considering the fact that $ m = m_{p1}+m_{p2}+m_{p3}$, we can rewrite the above inequality as follows:
\begin{equation}
\frac {(m_{k1}-m_{p1} + m_{k'2}-m_{p2} + m_{k"3}-m_{p3}) \times  \ln(1/ \beta) + 2 \ln n} {(1- \beta)}  <0.
\end{equation}

Since $k$ is the index of best expert in $OL_1$, obviously $m_{k1} \le m_{p1}$. This also stands for $m_{k'2}$ and $m_{k"3}$, so we have $m_{k'2} \le m_{p2}$ and $m_{k"3} \le m_{p3}$. 
By considering the region of instances that has been classified by $j^{th}$ learner, we define $K_j$ as the number of instances in this region and  
$X_{ij}$ as the error rate of the $i^{th}$ expert in $j^{th}$ learner in the specified region of instances. So we would have:
\begin{equation}
m_{ij} = K_j * X_{ij},  \hspace{2mm} \forall (i,j).
\end{equation} 

As we mentioned earlier as our main hypothesis, the overall best expert does not have the best error rate in both positive and negative regions. So, 
following inequality holds. 
\begin{equation}
X_{k1} < X_{p1} \hspace{2mm} or \hspace{2mm} X_{k'2} < X_{p2}.
\end{equation}

Without loss of generality,  suppose $X_{k1} < X_{p1}$. So, we have $X_{k'2} = X_{p2}$ and $X_{k"3} = X_{p3}$. Using these facts and also eq.(19) we can 
rewrite the eq.(18) as follows:
\begin{equation}
\frac {K_1\times(X_{k1}-X_{p1}) \times  \ln(1/ \beta) + 2 \ln n} {(1- \beta)}  <0.
\end{equation}

By some algebraic simplification, we obtain:
\begin{equation} 
K_1 \times (X_{p1}-X_{k1}) > \frac {2\ln n}{ln(1/B)}.
\end{equation}

In the above inequality, the only parameters that will be raised with increasing the size of dataset is $K_1$ and the other parameters, which are 
given bellow, are constants and limited.
\begin{equation} 
X_{p1}-X_{k1} = C_0 > 0
\end{equation}
\begin{equation} 
\frac {2\ln n}{ln(1/B)}= C_1 \ge 0.
\end{equation}

Using the above definitions we can rewrite eq.(22) as follows:
\begin{equation} 
K_1 \times C_0 > C_1.
\end{equation}

Since $K_1$ is increasing with increase of the number of data instances, hence by increasing the number of data instances, the above equation would be true which means the mistake 
bound of CRWM would be better than the mistake bound of RWM, which completes the proof of the theorem.
\end{proof}

\section{Experimental Results}

In this section, we compared the classification performance of the proposed method with randomized weighted majority, Online bagging and Online boosting on 
14 datasets.  These datasets are from the UCI Machine Learning  repository \cite{BacheLichman2013} to evaluate different aspects of the algorithms. The different characteristics 
of these datasets are shown in table 1. The number of instances in these datasets vary from 208 to 490000 and the number of attributes vary from 4 to 64. In 
order to show the effectiveness of the proposed method, some multi-class datasets are chosen as well as two-class datasets. All the four algorithms are 
implemented in JAVA using MOA framework. In all the implementations, we exploit naive Bayes as the base classifier algorithm, due to the fact that it
 is highly fast and can easily updated algorithm as well. In \cite{Oza05} the best performance of online bagging and online boosting achieved using 100 number of
 base classifiers. To have fair comparison we have used the same number of naive Bayes base classifiers in all of these methods.
 
\begin{table}[]
\centering
\caption{Datasets used for evaluation of algorithms}
\begin{tabular}{|l|c|c|c|} \hline
\bfseries Dataset & \bfseries  \#Examples & \bfseries \#Features & \bfseries \#Classes \\ \hline 

Sonar		&	208	&	60	&	2\\
Ionosphere		&	351	&	34	&	2 \\
Balance scale		&	625	&	4	&	3\\
Breast cancer		&	699	&	9	&	2\\
Diabetes		&	768	&	8	&	2\\
German credit		&	1000	&	20	&	2\\
Chess(rockvspawn)		&	3196	&	36	&	2\\
Spambase		&	4601	&	57	&	2 \\
Optdigits		&	5620	&	64	&	10\\
Mushroom		&	8124	&	22	&	2\\
Pendigit		&	10992	&	16	&	10\\
Nursary		&	12960	&	8	&	5\\
Letter recognition		&	20000	&	16	&	26\\
KDDCup99		&	490000	&	42	&	23\\
%Sensor stream		&	2219803	&	5	&	54\\

\hline \end{tabular}
\end{table}

There is only one parameters in the proposed algorithms, which is the penalty 
parameter ($\beta$) in both RWM and CRWM. We use $\beta = 0.5$ in all the experiments. Fig. 3 illustrates how the accuracies of RWM 
and CRWM algorithms depend on $\beta$. In this figure, the results of an experiment on only 4 datasets are shown. These results show that 
using this value for $\beta$ the algorithms often do near the best performance.

\begin{figure*}[ht!]
\centering
\includegraphics[]{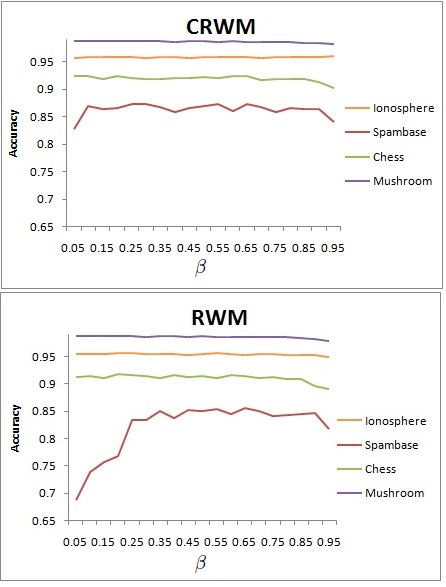}
\caption{Sensitivity of Accuracy to parameter $\beta$ in RWM and CRWM}
\label{overflow}
\end{figure*}

The idea of looking for the experts with lowest FP rate seems problematic. For instance, if the expert predicts everything as negative, FP rate of it would 
be zero and similarly, concern is there for lowest FN rate. In order to avoid such problems, we can exploit a biased estimation of FP rate instead of FP rate itself.
\begin{equation}
FP\hspace{1mm}Rate\hspace{1mm}Biased\hspace{1mm}Estimation = \frac {FP + n_p} { FP + TP + n},
\end{equation}
where, $n_p$ and $n$ are constant. For simplicity we can set $n_p$ to 1 and $n$ to 2. We also define the biased estimation of FN rate in the same way.

To determine the performance of the proposed algorithm, we compare the accuracy of CRWM algorithm with RWM, online bagging 
and online boosting. Table 2 shows the results of each of the algorithms on all the datasets. The performance measure used for comparison is the 
accuracy of the algorithms, which is the average accuracy of algorithms in 50 independent runs. The scenario which is used for evaluating the accuracy is the 
exact scenario of online learning; in which, every new instance is given to the algorithm and the algorithm makes its prediction. Then, the correct label 
would be given and will be compared to the output prediction of the algorithm.

\begin{table}[hbtp]
\centering
\caption{Accuracy of different algorithms}
\begin{tabular}{|l|c|c|c|c|l} \hline
\bfseries Dataset & \bfseries Online Bagging & \bfseries Online Boosting & \bfseries RWM &\bfseries CRWM \\ \hline 

Sonar	&	67.68	&	\textcolor{green}{68.69}	&	66.38	&	\textcolor{blue}{67.73}	\\  
Ionosphere	&	85.26	&	\textcolor{blue}{86.13}	&	82.23	&	\textcolor{green}{86.24}	\\
Balance scale	&	\textcolor{green}{83.52}	&	75.31	&	73.76	&	\textcolor{blue}{82.86}	\\
Breast cancer	&	\textcolor{green}{95.91}	&	94.03	&	93.97	&	\textcolor{blue}{95.77}	\\
Diabetes	&	\textcolor{blue}{74.28}	&	67.91	&	71.79	&	\textcolor{green}{74.43}	\\
German credit	&	\textcolor{green}{73.69}	&	67.94	&	72.95	&	\textcolor{blue}{73.55}	\\
Chess(rockvspawn)	&	86.81	&	90.88	&	\textcolor{blue}{90.96}	&	\textcolor{green}{91.37}	\\
Spambase	&	78.99	&	84.07	&	\textcolor{blue}{85.95}	&	\textcolor{green}{87.32}	\\
Optdigits	&	\textcolor{blue}{89.46}	&	87.51	&	89.13	&	\textcolor{green}{89.94}	\\
Mushroom	&	95.25	&	\textcolor{green}{99.71}	&	97.29	&	\textcolor{blue}{98.63}	\\
Pendigit	&	\textcolor{blue}{85.61}	&	86.23	&	85.33	&	\textcolor{green}{86.27}	\\
Nursary	&	\textcolor{blue}{89.88}	&	89.52	&	85.1	&	\textcolor{green}{89.96}	\\
Letter recognition	&	63.4	&	58.93	&	\textcolor{blue}{63.89}	&	\textcolor{green}{64.53}	\\
KDDCup99 &	\textcolor{blue}{96.86}	&	96.41	&	96.51	&	\textcolor{green}{97.67}	\\
%Sensor stream	&	\textcolor{blue}{41.47}	&	30.78	&	40.87	&	\textcolor{green}{44.15}	\\

\hline \end{tabular}
\end{table}

As it has shown in table 2, the results of CRWM are dominant in most of the datasets. Another fact is that in the remaining datasets it has the second best performance. 
These stable results can be considered as a great power of the proposed method. 

Another result that comes from table 2 is that the accuracy of CRWM is higher compared to RWM in all the datasets, which means our theoretical bound 
is fully supported by our experimental results. Although we have shown that the CRWM obtains more reasonable results when the number of data is increased, 
we can see even in datasets with a few number of instances that the CRWM algorithm has better accuracy compared to the RWM, which shows another power of the CRWM algorithm.

While in some data sets there is significant difference between accuracy of CRWM and other algorithms, in some others the difference is not very clear. 
So, we have arranged student's t-test to clarify this ambiguity. Table 3 shows the result of paired student's t-test between CRWM and the other three 
algorithms. 
The output of this test is P-value. A P-value below 0.05 is generally considered statistically significant, So the one who has better average is considered to have better results than the other.
 while one of 0.05 or greater indicates no difference between the groups. In this 
table, a blue cell shows there is no difference. Using the output P-value of this test we have arranged table3. In which, a red cell means CRWM is worse than the specified algorithm on specified dataset, while black cells; which 
cover most of the table (36/42 number of cells), show that CRWM is better in compared to that algorithm on that dataset. As it is clear, only 3 cells of the table are
red and just 6 cells are blue, which means the great performance of CRWM algorithm in compared to other algorithms and it confirms the results of table 2.

\begin{table}[hbtp]
\centering
\caption{Paired student's t-test beetween CRWM and other algorithms }
%\begin{tabular}{|l|c|c|c|l|} \hline
\begin{tabular}{|p{5cm}| p{1.8cm}| p{1.8cm} |p{1.8cm} |l c c c l} \hline
\bfseries Dataset & \bfseries CRWM vs. Bagging & \bfseries CRWM vs. Boosting & \bfseries CRWM vs. RWM  \\ \hline 

Sonar	&	\textcolor{blue}{draw}	&	\textcolor{blue}{draw}	&	win	\\
Ionosphere	&	win	&	\textcolor{blue}{draw}	&	win	\\
Balance scale	&	\textcolor{red}{lose}	&	win	&	win	\\
Breast cancer	&	\textcolor{red}{lose}	&	win	&	win	\\
Diabetes	&	\textcolor{blue}{draw}	&	win	&	win	\\
German credit	&	\textcolor{blue}{draw}	&	win	&	win	\\
Chess(rockvspawn)	&	win	&	win	&	win	\\
Spambase	&	win	&	win	&	win	\\
Optdigits	&	win	&	win	&	win	\\
Mushroom	&	win	&	\textcolor{red}{lose}	&	\textcolor{blue}{draw}	\\
Pendigit	&	win	&	win	&	win	\\
Nursary	&	win	&	win	&	win	\\
Letter recognition	&	win	&	win	&	win	\\
KDDCup99	&	win	&	win	&	win	\\
%Sensor stream	&	win	&	win	&	win	\\
\hline 
OVERALL(win/\textcolor{blue}{draw}/\textcolor{red}{lose})&	9/\textcolor{blue}{3}/\textcolor{red}{2}	&	11/\textcolor{blue}{2}/\textcolor{red}{1}	&	13/\textcolor{blue}{1}/\textcolor{red}{0} \\

\hline \end{tabular}
\end{table}

Considering the cascading structure of CRWM and also its outstanding results, it seems that this excellence is obtained with the cost of 
more tunning time. However, the results of table 4 refuse this view.
Table 4 shows the running time required for each algorithm on every dataset. The results of this table show that there is no significant difference in 
running time between CRWM, RWM and Online Bagging,
Even though the number of weight factors in CRWM depend on the number of classes, it still 
exploits the same number of base classifiers as other algorithms, which is the main factor in running time of the algorithms. In addition, after receiving 
the correct label, CRWM gives it to only one of the learners to learn and justify its weights. These points of view indicate why there is no great difference 
between the running times of this algorithms. The only reason that makes a little difference in running times is the overhead of creating and using more weight factors, 

\begin{table}[hbtp]
\centering
\caption{Running times (sec.)}
\begin{tabular}{|l|c|c|c|c|} \hline
\textbf{Dataset} & \textbf{Online Bagging} & \textbf{Online Boosting} & \textbf{RWM} &\textbf{CRWM} \\ \hline 

Sonar		&	0.54	&	1.12	&	0.53	&	0.57	\\
Ionosphere		&	0.66	&	1.22	&	0.68	&	0.7	\\
Balance scale		&	0.1	&	0.17	&	0.1	&	0.16	\\
Breast cancer		&	0.24	&	0.34	&	0.23	&	0.27	\\
Diabetes		&	0.17	&	0.36	&	0.17	&	0.21	\\
German credit		&	0.34	&	0.72	&	0.4	&	0.44	\\
Chess(rockvspawn)		&	0.98	&	1.84	&	0.96	&	1.45	\\
Sambase		&	8.1	&	10.3	&	8.05	&	8.78	\\
Optdigits		&	50.6	&	83.1	&	49.1	&	54.8	\\
Mushroom		&	1.7	&	2.07	&	1.6	&	2.18	\\
Pendigit		&	25.8	&	40.6	&	26.1	&	28.4	\\
Nursary		&	1.23	&	5.61	&	1.28	&	2.27	\\
Letter~recognition		&	102	&	196	&	106	&	111	\\
KDDCup99		&	6674	&	9017	&	6232	&	6750	\\
%Sensor stream		&	6680	&	86.7	&	69.1	&	8085	\\
%Sensor stream		&	71.6	&	86.7	&	69.1	&	74.1	\\

\hline \end{tabular}
\end{table}

 Table 5 aims to confirm our main hypothesis about comparison of the values of best FP rate, best FN rate and best error rate among the base classifiers.
A green cell in the best FP rate and best FN rate columns means that the value is better than the corresponding error rate value, and existence of such cell
in every row is exactly what what we have assumed to be true in every groups of base classifiers.

\begin{table}[hbtp]
\centering
\caption{best FP, FN and error rate among the base classifiers}
\begin{tabular}{|l|c|c|c|l} \hline
\textbf{Dataset} & \textbf{best FP rate} & \textbf{best FN rate} & \textbf{best error rate}  \\ \hline 

Sonar	&	0.306	&	\textcolor{green}{0.222}	&	0.277	\\
Ionosphere	&	\textcolor{green}{0.077}	&	\textcolor{green}{0.083}	&	0.112	\\
Breast cancer	&	\textcolor{green}{0.011}	&	0.061	&	0.035	\\
Diabetes	&	0.293	&	\textcolor{green}{0.206}	&	0.238	\\
German credit	&	\textcolor{green}{0.201}	&	0.394	&	0.251	\\
Chess(rockvspawn)	&	0.098	&	\textcolor{green}{0.059}	&	0.085	\\
Spambase	&	\textcolor{green}{0.038}	&	\textcolor{green}{0.132}	&	0.133	\\
Mushroom	&	0.02	&	\textcolor{green}{0.001}	&	0.019	\\

\hline \end{tabular}
\end{table}

Table 6 aims to compare the mistake bound of CRWM and RWM with each other for the two-class datasets used in our experiments. It also compares these 
theoretical mistake bounds with the experimental results. In this experiment, the value of theoretical mistake bound is calculated using the results of 
table 5 and some other parameters that exist in corresponding formulas. As it is shown in table 6, the theoretical mistake bound is always greater than 
the corresponding experimental result, which confirms the accuracy of calculated mistake bound. In addition, when the size of datasets is small, 
the mistake bound of RWM is lower than the one of CRWM. However, in larger datasets the mistake bound of CRWM excels its rival, which is exactly the 
point that we have mentioned in theorem 3.
 
\begin{table}
\centering
\caption{Number of Mistakes in Theoretical Mistake Bound vs. Experimental Result}
\begin{tabular}{|l|c|c|c|c|}
\hline
 &\multicolumn{2}{c|}{\textbf{CRWM}}&\multicolumn{2}{c|}{\textbf{RWM}}\\
\cline{2-5}
\bfseries Dataset & \bfseries Mistake bound & \bfseries Result & \bfseries Mistake bound & \bfseries Result \\ 
\hline\hline

Sonar	&	103&	66	&	88.6	&	69	\\
Ionosphere	&	68.9	&	47	&	63.4	&	61	\\
Cancer	&	55.6	&	29	&	42.4	&	41	\\
Diabetes	&	270	&	195	&	260 &	214	\\
German credit	&	350	&	262	&	354	&	267	\\
Chess(rockvspawn)	&	376	&	273	&	381	&	286	\\
spambase	&	650	&	578	&	847	&	640	\\
mushroom	&	145	&	110	&	227	&	218	\\
\hline

\end{tabular}
\end{table}

\section{Conclusion and Future Works}

In this paper, we proposed a new online ensemble learning algorithm, called CRWM. It is shown that CRWM's mistake bound is 
better than that of RWM's when the size of the input is increased. In addition, the experimental results reveal that CRWM obtains a better accuracy compared 
to RWM with a wide range of input sizes. By carrying out several experiments, we have shown that this new algorithm outperforms other online ensemble learning 
algorithms. It usually acquires the best performance or the second best performance among these algorithms, indicating its superiority among them.

In this study, we did not address imbalanced datasets. However, the structure of CRWM that focuses on each class separately, provides us a powerful 
facility in defining different cost functions on each class. Besides, it provides us with a means to change the order of online learners. For instance,
for classes that are needed to have more true positive we can move their corresponding learners to the top of the structure. Inversely, whenever the false 
positive of a specified class is significant we can move the corresponding learner to the bottom of CRWM structure. This dynamic structure is a powerful feature that distinguishes CRWM from other similar algorithms.

Clearly, using different base classifiers for each learner may lead to a better accuracy. In addition, utilizing one-class classifiers as base classifiers 
would cause great effects on accuracy. However it is needless to say that, using more base classifiers lead to an increase in running time of the algorithm. 
While, using just different weight factors, as we did in CRWM, does not affect it so much. This means that, whenever the running time is not an important 
factor for the algorithm we can use different base classifiers for each learner or even use one-class classifiers to get better results.

\bibliographystyle{IEEEtran}
\bibliography{crwm}

% Generated by IEEEtran.bst, version: 1.13 (2008/09/30)
\begin{thebibliography}{10}
\providecommand{\url}[1]{#1}
\csname url@samestyle\endcsname
\providecommand{\newblock}{\relax}
\providecommand{\bibinfo}[2]{#2}
\providecommand{\BIBentrySTDinterwordspacing}{\spaceskip=0pt\relax}
\providecommand{\BIBentryALTinterwordstretchfactor}{4}
\providecommand{\BIBentryALTinterwordspacing}{\spaceskip=\fontdimen2\font plus
\BIBentryALTinterwordstretchfactor\fontdimen3\font minus
  \fontdimen4\font\relax}
\providecommand{\BIBforeignlanguage}[2]{{%
\expandafter\ifx\csname l@#1\endcsname\relax
\typeout{** WARNING: IEEEtran.bst: No hyphenation pattern has been}%
\typeout{** loaded for the language `#1'. Using the pattern for}%
\typeout{** the default language instead.}%
\else
\language=\csname l@#1\endcsname
\fi
#2}}
\providecommand{\BIBdecl}{\relax}
\BIBdecl

\bibitem{Schaffer93}
C.~Schaffer, ``Selecting a classification method by cross-validation,''
  \emph{Machine Learning}, vol.~13, no.~1, pp. 135--143, 1993.

\bibitem{Bauer99}
E.~Bauer and R.~Kohavi, ``An empirical comparison of voting classification
  algorithms: Bagging, boosting, and variants,'' \emph{Machine learning},
  vol.~36, no. 1-2, pp. 105--139, 1999.

\bibitem{Chernov10supermartingales}
A.~Chernov, Y.~Kalnishkan, F.~Zhdanov, and V.~G. Vovk, ``Supermartingales in
  prediction with expert advice,'' \emph{Theoretical Computer Science}, vol.
  411, no.~29, pp. 2647--2669, 2010.

\bibitem{Bianchi06predictionlearning}
N.~Cesa-Bianchi, \emph{Prediction, learning, and games}.\hskip 1em plus 0.5em
  minus 0.4em\relax Cambridge University Press, 2006.

\bibitem{Vovk_predictionwith}
V.~G. Vovk and F.~Zhdanov, ``Prediction with expert advice for the brier
  game,'' \emph{The Journal of Machine Learning Research}, vol.~10, pp.
  2445--2471, 2009.

\bibitem{Vovk95agameofprediction}
V.~G. Vovk, ``A game of prediction with expert advice,'' in \emph{Proceedings
  of the eighth annual conference on Computational learning theory}.\hskip 1em
  plus 0.5em minus 0.4em\relax ACM, 1995, pp. 51--60.

\bibitem{Saeedian08spamdetection}
M.~F. Saeedian and H.~Beigy, ``Learning to filter spam e-mails: An ensemble
  learning approach,'' \emph{International Journal of Hybrid Intelligent
  Systems}, vol.~9, no.~1, pp. 27--43, 2012.

\bibitem{Zhen06anapproachtospam}
Z.~Yang, X.~Nie, W.~Xu, and J.~Guo, ``An approach to spam detection by naive
  bayes ensemble based on decision induction,'' in \emph{the Sixth
  International Conference on Intelligent Systems Design and Applications
  (ISDA'06)}, vol.~2.\hskip 1em plus 0.5em minus 0.4em\relax IEEE, 2006, pp.
  861--866.

\bibitem{Neumayer06clusteringbased}
R.~Neumayer, ``Clustering based ensemble classification for spam filtering,''
  in \emph{the 7th Workshop on Data Analysis (WDA'06)}, 2006, pp. 11--22.

\bibitem{Mukkamala05intrusiondetection}
S.~Mukkamala, A.~H. Sung, and A.~Abraham, ``Intrusion detection using an
  ensemble of intelligent paradigms,'' \emph{Journal of network and computer
  applications}, vol.~28, no.~2, pp. 167--182, 2005.

\bibitem{Fan02ensemblebased}
W.~Fan and S.~J. Stolfo, ``Ensemble-based adaptive intrusion detection.'' in
  \emph{the 2nd SIAM International Conference on Data Mining (SDM 2002)}.\hskip
  1em plus 0.5em minus 0.4em\relax SIAM, 2002.

\bibitem{Cherbrolu05featurededuction}
S.~Chebrolu, A.~Abraham, and J.~P. Thomas, ``Feature deduction and ensemble
  design of intrusion detection systems,'' \emph{Computers \& Security},
  vol.~24, no.~4, pp. 295--307, 2005.

\bibitem{Avidan05ensembletracking}
S.~Avidan, ``Ensemble tracking,'' \emph{IEEE Transactions onPattern Analysis
  and Machine Intelligence}, vol.~29, no.~2, pp. 261--271, 2007.

\bibitem{Tian07onlineensemble}
M.~Tian, W.~Zhang, and F.~Liu, ``On-line ensemble svm for robust object
  tracking,'' ser. Lecture Notes in Computer Science.\hskip 1em plus 0.5em
  minus 0.4em\relax Springer Berlin Heidelberg, 2007, vol. 4843, pp. 355--364.

\bibitem{sb13a}
S.~Sadeghi and H.~Beigy, ``A new ensemble method for feature ranking in text
  mining,'' \emph{International Journal on Artificial Intelligence Tools},
  vol.~22, no.~3, 2013.

\bibitem{Breiman96}
L.~Breiman, ``Bagging predictors,'' \emph{Machine learning}, vol.~24, no.~2,
  pp. 123--140, 1996.

\bibitem{Freund96Experiments}
Y.~Freund and R.~E. Schapire, ``Experiments with a new boosting algorithm,'' in
  \emph{ICML}, vol.~96, 1996, pp. 148--156.

\bibitem{Ross96baggingboosting}
J.~R. Quinlan, ``Bagging, boosting, and {C4.5},'' in \emph{the Thirteenth
  National Conference on Artificial Intelligence (AAAI 1996)}, 1996.

\bibitem{Oza05}
N.~C. Oza, ``Online bagging and boosting,'' in \emph{IEEE international
  conference on Systems, man and cybernetics}, vol.~3.\hskip 1em plus 0.5em
  minus 0.4em\relax IEEE, 2005, pp. 2340--2345.

\bibitem{sb11}
P.~Sobhani and H.~Beigy, ``New drift detection method for data streams,'' in
  \emph{Lecture Notes in Computer Science}, vol. 6943.\hskip 1em plus 0.5em
  minus 0.4em\relax Springer, 2011, pp. 88--97.

\bibitem{hab12a}
Z.~Ahmadi and H.~Beigy, ``Semi-supervised ensemble learning of data streams in
  the presence of concept drift,'' in \emph{Lecture Notes in Computer Science},
  vol. 7209.\hskip 1em plus 0.5em minus 0.4em\relax Springer, 2012, pp.
  526--537.

\bibitem{hab12b}
M.~J. Hosseini, Z.~Ahmadi, and H.~Beigy, ``New management operations on
  classifiers pool to track recurring concepts,'' in \emph{Lecture Notes in
  Computer Science}, vol. 7448.\hskip 1em plus 0.5em minus 0.4em\relax
  Springer, 2012, pp. 327--339.

\bibitem{hab13}
------, ``Using a classifier pool in accuracy based tracking of recurring
  concepts in data stream classification,'' \emph{Evolving Systems}, vol.~4,
  no.~1, pp. 43--60, 2013.

\bibitem{Jacobs91adaptivemixtures}
R.~A. Jacobs, M.~I. Jordan, S.~J. Nowlan, and G.~E. Hinton, ``Adaptive mixtures
  of local experts,'' \emph{Neural computation}, vol.~3, no.~1, pp. 79--87,
  1991.

\bibitem{Jordan94hierarchicalmixtures}
M.~I. Jordan and R.~A. Jacobs, ``Hierarchical mixtures of experts and the em
  algorithm,'' \emph{Neural computation}, vol.~6, no.~2, pp. 181--214, 1994.

\bibitem{Desantis88learningprobabilistic}
A.~DeSantis, G.~Markowsky, and M.~N. Wegman, ``Learning probabilistic
  prediction functions,'' in \emph{the 29th Annual Symposium on Foundations of
  Computer Science, 1988}.\hskip 1em plus 0.5em minus 0.4em\relax IEEE, 1988,
  pp. 110--119.

\bibitem{Littlestone94}
N.~Littlestone and M.~K. Warmuth, ``The weighted majority algorithm,''
  \emph{Information and computation}, vol. 108, no.~2, pp. 212--261, 1994.

\bibitem{Vovk90aggregatingstrategies}
V.~G. Vovk, ``Aggregating strategies,'' in \emph{third Workshop on
  Computational Learning Theory}.\hskip 1em plus 0.5em minus 0.4em\relax Morgan
  Kaufmann, 1990, pp. 371--383.

\bibitem{Bianchi05improvedsecondorder}
N.~Cesa-Bianchi, Y.~Mansour, and G.~Stoltz, ``Improved second-order bounds for
  prediction with expert advice,'' \emph{Machine Learning}, vol.~66, no. 2-3,
  pp. 321--352, 2007.

\bibitem{Blum98}
A.~Blum, ``On-line algorithms in machine learning,'' in \emph{Online
  Algorithms}, ser. Lecture Notes in Computer Science, A.~Fiat and
  G.~Woeginger, Eds.\hskip 1em plus 0.5em minus 0.4em\relax Springer Berlin
  Heidelberg, 1998, vol. 1442, pp. 306--325.

\bibitem{BacheLichman2013}
\BIBentryALTinterwordspacing
K.~Bache and M.~Lichman, ``{UCI} machine learning repository,'' 2013. [Online].
  Available: \url{http://archive.ics.uci.edu/ml}
\BIBentrySTDinterwordspacing

\end{thebibliography}

\end{document}